\newtheorem{theorem}{Theorem}
\newtheorem{corollary}{Corollary}
\newcounter{yaocounter}
\title{\LARGE \bf
Reward Shaping for Building Trustworthy Robots in Sequential Human-Robot Interaction
}
\author{Yaohui Guo$^{1}$ and X. Jessie Yang$^{1}$ and Cong Shi$^{1}$
\thanks{*This work was supported by the Rackham pre-doctoral fellowship awarded to the first author and the Air Force Office of Scientific Research under Grant No. FA9550-20-1-0406.}
\thanks{$^{1}$Yaohui Guo, X. Jessie Yang and Cong Shi are with the Department of Industrial and Operations Engineering, University of Michigan, 1209 Beal Avenue, Ann Arbor, Michigan
        {\tt\small yaohuig, xijyang, shicong@umich.edu}}%
}
\begin{document}

\maketitle
\thispagestyle{empty}
\pagestyle{empty}

\begin{abstract}

Trust-aware human-robot interaction (HRI) has received increasing research attention, as trust has been shown to be a crucial factor for effective HRI. Research in trust-aware HRI discovered a dilemma --- maximizing task rewards often leads to decreased human trust, while maximizing human trust would compromise task performance. In this work, we address this dilemma by formulating the HRI process as a two-player Markov game and utilizing the reward-shaping technique to improve human trust while limiting performance loss. Specifically, we show that when the shaping reward is potential-based, the performance loss can be bounded by the potential functions evaluated at the final states of the Markov game. We apply the proposed framework to the experience-based trust model, resulting in a linear program that can be efficiently solved and deployed in real-world applications. We evaluate the proposed framework in a simulation scenario where a human-robot team performs a search-and-rescue mission. The results demonstrate that the proposed framework successfully modifies the robot's optimal policy, enabling it to increase human trust at a minimal task performance cost.
\end{abstract}

\section{Introduction}
Human trust plays a crucial role in human-robot interaction (HRI) as it mediates the human's reliance on the robot, thus directly affecting the effectivenes of the human-robot team~\cite{lee2004trust, Sheridan:2016kn, kok2020trust}. As a result, researchers have proposed \textit{trust-aware} human-robot planning~\cite{chen2020trust}, which equips a robot with the ability to estimate and anticipate human trust and enables it to strategically plan its actions to foster better cooperation, improve teamwork, and ultimately enhance the overall performance of the human-robot team.



Trust-aware HRI explicitly consider human trust in the robot's decision-making processes \cite{chen2020trust, guo2021reverse, Bhat_RAL_2022}. Chen et al.~\cite{chen2020trust} modeled the sequential HRI process as a partially observable Markov decision process (POMDP) and incorporated human trust into the state transition function, allowing the robot to optimize its objectives in the interaction process. However, the authors discovered a dilemma --- maximizing task rewards often leads to decreased human trust, while maximizing human trust would compromise task performance. Guo et al. further investigated the dilemma and showed that the robot could intentionally ``deceive'' its human partner in order to maximize its rewards~\cite{guo2021reverse}. To address this issue, they proposed adding a ``trust-seeking'' term to the reward function to encourage the robot to increase human trust and avoid deception. However, it remains unclear how to design such terms.

\begin{figure}[t]
    \centering
    \includegraphics[width=1\columnwidth]{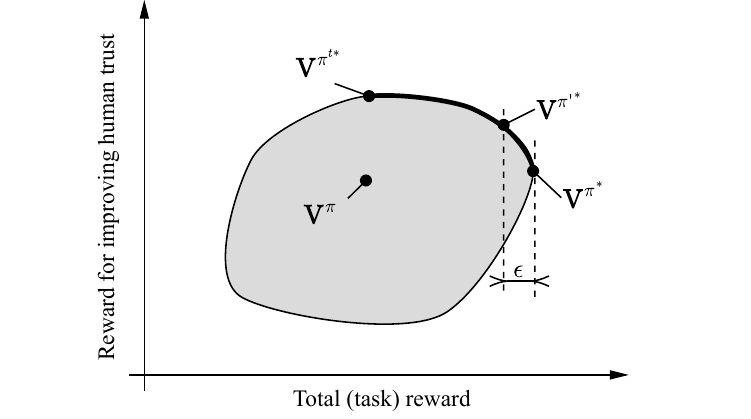}
    \caption{This figure illustrates the idea of improving human trust through reward shaping. The horizontal axis is the total task reward while the vertical axis is the total trust-related reward. The shaded area stands for the collection of policies of our interest. A point $\mathbf{V}^{\pi}\in \mathbb{R}^2$ indicates the value a policy $\pi$ can earn during the interaction process. The policy $\pi^*$ is the optimal policy for earning the task reward as its value attains the maximum along the total task reward axis; policy $\pi^{t*}$ is the optimal policy for gaining human trust. The bold line is the Pareto front, i.e., the set of policies that yield the best possible trade-offs between human trust and total task rewards. Our goal is to find a policy $\pi '$ such that it seeks to improve human trust at a small cost of task reward. }
    \label{fig:Rshaping_intro}
    \vspace{-3mm}
\end{figure}

The problem of balancing the robot's total task reward and human trust is fundamentally a bi-objective optimization problem, as illustrated in figure~\ref{fig:Rshaping_intro}. It has been shown in previous studies that there does not always exist a unique dominant policy for the robot -- a policy that can maximize the robot's total reward and human trust at the same time~\cite{guo2021reverse,chen2020trust}. A prevalent concept in bi-objective optimization involves determining the Pareto front, which represents an optimal balance between conflicting objectives, thereby allowing the system designer to select a suitable policy from this set. However, computing the Pareto front in an MDP can be computationally heavy, and it requires expert knowledge to choose an optimal one from the Pareto front. 

Instead, we address the reward design problem in trust-aware sequential HRI by reward-shaping, a method used in reinforcement learning to guide the learning process of an agent by modifying its reward function. We model the sequential HRI process as a two-player Markov game and aim to design a shaping reward that can guide the robot to gain human trust while guaranteeing small performance loss. Specifically, we prove that if the shaping reward is a carefully designed potential-based function, the performance loss can be bounded by the potential function evaluated at the final states of the Markov game. We show that applying the proposed framework to the experience-based trust model results in a linear program that can be efficiently solved. To evaluate the proposed method, we simulated a scenario where a human-robot team performs a search-and-rescue (SAR) mission. The results demonstrate that our method successfully modifies the robot's optimal policy, enabling it to increase human trust while satisfying the constraint on performance loss.

The contributions of this work include:
\begin{itemize}
    \item Proposing a computational framework for balancing task performance and human trust in trust-aware sequential HRI.
    \item Developing a novel reward-shaping method for designing the reward term with a theoretical guarantee on the performance loss.
\end{itemize}

The remainder of this paper is organized as follows: we first review relevant literature in section~\ref{sec:relatedWork}. In section~\ref{sec:formulation}, we formulate the trust-aware human-robot interaction problem as a two-player Markov game. In section~\ref{sec:method}, we present the reward-shaping method for designing the trust-seeking term. In section~\ref{sec:case}, we present the simulation scenario used to evaluate the proposed framework. Finally, in section~\ref{sec:discussion}, we discuss the results and summarize the work.

\section{Related Work}\label{sec:relatedWork}

In this work, we investigate the problem of reward design for trustworthy robots in trust-aware sequential HRI. Our method is developed upon previous research, including studies on human trust in robots and trust-aware decision-making, as well as reward shaping. We review the relevant literature in this section.

It is worth noting that trust has been widely studied in different areas and given different definitions. In particular, in this work, we use the definition given by Lee and See \cite{lee2004trust}, which highlights the uncertainty in HRI: ``trust is the attitude that an agent will help achieve an individual’s goals in a situation characterized by uncertainty and vulnerability''.

\subsection{Computational Trust Model in HRI}
Previous literature attempted to understand the development and evolution of human trust in an HRI process. 
Empirical studies have been conducted to understand the temporal dynamics of trust when a person interacts with autonomy repeatedly~\cite{Yang:2017:EEU:2909824.3020230, Guo2020_IJSR, Yang2021_HFJ, Lee:1992it, manzey2012human, DeVisser_IJSR, malle2021multidimensional}. Three major properties that characterize how a person's trust in autonomy changes due to moment-to-moment interactions with autonomy are summarized in ~\cite{Guo2020_IJSR, Yang2021_chapter}, namely \textit{continuity}, \textit{negativity bias}, and \textit{stabilization}. 

In addition, several computational trust models have been developed, including~\cite{Guo2020_IJSR, Xu2015optimo, soh2020multi, hu2016real, Guo-RSS-23}. Xu and Dudek proposed the online probabilistic trust inference model (OPTIMo)~\cite{Xu2015optimo}, which employs Bayesian networks to estimate human trust based on the autonomous agent's performance and human behavioral signals. Soh et al.~\cite{soh2020multi} proposed a Bayesian model that combines Gaussian processes and recurrent neural networks to predict trust levels for different tasks. Based on the three properties identified in empirical studies, Guo and Yang~\cite{Guo2020_IJSR} modeled trust as a Beta random variable, parameterized by positive and negative interaction experiences a person has with a robot. In their following work, they extended the model to the multi-human-multi-robot case by introducing trust propagation between agents~\cite{Guo-RSS-23}. 

For a detailed review of the computational models, see~\cite{kok2020trust}.

\subsection{Trust-aware Planning}
Endowed with a trust prediction model, a robot can predict how human trust changes due to moment-to-moment interactions and in turn plan its actions accordingly. Existing studies in trust-aware decision-making modeled HRI processes as Markov decision processes (MDPs). Chen et al.~\cite{chen2020trust} proposed the trust-POMDP to let a robot actively estimate and exploit its human teammate's trust. Their human-subject study showed that purely optimizing task performance in a human-robot team may lead to decreased human trust. Losey and Sadigh~\cite{losey2019robots} modeled human-robot interaction as a two-player POMDP where the human does not know the robot's objective. They proposed 4 ways for the robot to formulate the human's perception of the robot's objective and showed the robot will be more communicative if it assumes the human trusts the robot and thus increases the human's involvement. 

\subsection{Reward Shaping in Trust-aware HRI}
As pure task-driven rewards lead to low human trust, Chen et al. \cite{chen2020trust} showed that pure trust-seeking rewards would correct such issues but lead to suboptimal task performance. Guo et al. \cite{guo2021reverse} further examined the robot's optimal policies under different human behavior assumptions and suggested that adding a decaying trust-seeking term to the reward function would encourage the robot to seek high human trust during the initial interaction and maintain high human trust throughout the entire process. The idea of adding an additional term to the task reward is called \textit{reward shaping}, which is a technique initially developed to accelerate the learning speed of an agent in reinforcement learning. Ng et al.~\cite{ng1999policy} demonstrated that in an infinite-horizon MDP, the optimal policy with respect to the shaped reward remains optimal in the original model, provided that the shaping reward can be expressed as a particular combination of potential functions. In the following section, we leverage this potential-based reward-shaping approach to devise a trust reward that yields minimal performance degradation.

\section{Problem Formulation}\label{sec:formulation}
In this section, we formulate the sequential HRI problem as a two-player Markov game. In addition, we present the experience-based trust dynamics model, which will be later used in the proposed framework.
\subsection{Sequential HRI as a Two-player Game}\label{sec:formulation_game}
We consider the scenario where a human $h$ and a robot $r$ work as a team collaboratively for $N$ rounds, as shown in figure~\ref{fig:process_and_flow}. We formulate this process as a finite-horizon two-player Markov game $M=\langle \mathcal{S} ,\mathcal{A}^{h} ,\mathcal{A}^{r} ,T,R^{h} ,R^{r} \rangle $, where $\mathcal{S}$ is the set of environment states, $\mathcal{A}^{h}$ and $\mathcal{A}^{r}$ are the sets of actions available to the human and the robot, $T$ is a Markov kernel from $\mathcal{S} \times \mathcal{A}^{h} \times \mathcal{A}^{r}$ to $\mathcal{S}$ ($\sigma $-algebra omitted) specifying the transitional probability of the process, and $R^{h}$ and $R^{r}$ are the reward functions of the two agents. At round $n$, these two agents observe the current state ${s}_{n} \in \mathcal{S}$ of the environment and then take actions. The robot selects an action $a_{n}^{r} =\pi ({s}_{n}) \in \mathcal{A}^{r}$ according to its policy $\pi $. Then, the human observes the robot's action and takes action $a_{n}^{h} =f\left(s_{n} ,a_{n}^{r}\right) \in \mathcal{A}^{h}$ according to his or her policy $f$. Their actions transition the environment to a new state ${s}_{n+1}$ according to the probability $T\left( \cdot \middle| {s}_{n} ,a_{n}^{h} ,a_{n}^{r}\right)$, and give the human and the robot rewards $x_{n}^{h} =R^{h}\left( a_{n}^{h} ,a_{n}^{r} ,s_{n} ,s_{n+1}\right)$ and $x_{n}^{r} =R^{r}\left( a_{n}^{h} ,a_{n}^{r} ,s_{n} ,s_{n+1}\right)$ respectively. The game starts from the initial state $s_{1}$ and terminates after $N$ steps at state $s_{N+1}$, where the latter depends on the policies $\pi $ and $f$. The goal of the robot is to maximize the expected discounted total payoff
\begin{equation}
\label{eq:task_goal}
J_{M}( s_{1}) =\mathbb{E}\left[\sum _{n=1}^{N} \gamma ^{n-1} x_{n}^{r}\right] .
\end{equation}
The optimal policy, possibly not unique, is a policy $\pi ^{*}$ that maximizes $J_{M}( s_{1})$, i.e., $\pi ^{*} =\arg\max_{\pi \in \Pi } J_{M}( s_{1})$, where $\Pi$ is the set of available policies.

To analyze the proposed model, we define the value function. Given any policy $\pi $ at round $n$, we define the value function $V_{n}^{\pi } :\mathcal{S}\rightarrow \mathbb{R}$ as the expected discounted reward by the end of round $N$, i.e.,
\begin{equation}
\label{eq:value_func_M}
V_{n}^{\pi } (s)=\mathbb{E}_\pi\left[\sum _{i=n}^{N} \gamma ^{i-n} x_{i}^{r}\right] ,
\end{equation}
where $x_{i}$ is the reward received by the robot during round $i$ by following the policy $\pi $ from state $s$ and thereafter, and the expectation is over the state-transitions taken upon executing $\pi $. 


\begin{figure}[t]
  \centering
    \begin{subfigure}{1\columnwidth}
    \captionsetup{width=1\linewidth}
        \centering
        \includegraphics[width=0.8\columnwidth]{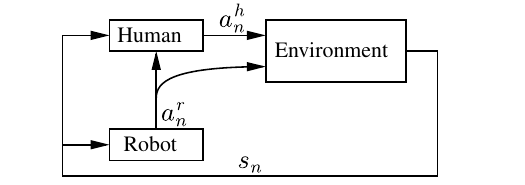}
        \caption{Sequential HRI as a two player Markov game. At round $n$, these two agents observe the current state of the environment $s_{n} \in \mathcal{S}$. The robot selects an action $a_{n}^{r} =\pi ({s}_{n}) \in \mathcal{A}^{r}$ according to its policy $\pi $. Then, the human observes the robot's action and takes action $a_{n}^{h} =f\left({s}_{n} ,a_{n}^{r}\right) \in \mathcal{A}^{h}$ according to his or her policy $f$. The joint actions then act on the environment, which transitions to a new state ${s}_{n+1}$ with probability $T\left( \cdot \middle| {s}_{n} ,a_{n}^{h} ,a_{n}^{r}\right)$.}
        \label{fig:process}
    \end{subfigure}
    \begin{subfigure}{1\columnwidth}
    \captionsetup{width=1\linewidth}
        \centering
        \vspace{5pt}
        \includegraphics[width=0.8\columnwidth]{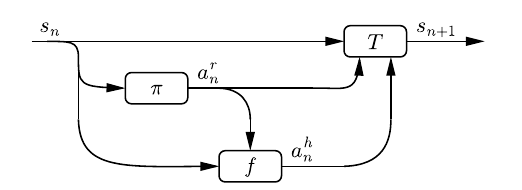}
        \caption{This information flow chart highlights the roles of the policies $\pi$ and $f$ in the decision process. The robot selects its action $a^r_n$ based on its policy $\pi$ and the human selects his or her action based on the behavior model $f$.}
        \label{fig:flow}
    \end{subfigure}
    \caption{A general sequential HRI framework.}
  \label{fig:process_and_flow}
  \vspace{-3mm}
\end{figure}

Our approach assumes that the human agent plays a supervisory role by observing the robot's actions before deciding on their own action. This mirrors real-world scenarios where humans can intervene in robots' operations. The cases where the human and robot agents act simultaneously is a special setting where the human policy $f$ is independent of the robot's current action $a^r_n$, i.e., $f\left(s_{n} ,a_{n}^{r} \right) =f^{\prime }(s_{n})$ for some $f^{\prime } :\mathcal{S}\rightarrow \mathcal{A}^{h}$. Furthermore, we assume the game is Markovian. Non-Markovian games can also be formulated in this framework through Markovian embedding, i.e., we can include more variables in the state space to make the transitional probability and the policies Markovian. Finally, given our assumption that the policy $f$ of the human is stationary, the game is deemed Markovian from the robot's perspective. This allows us to treat the game as an MDP for analysis purposes.

\subsection{Experience-based Trust Dynamics}
\label{sec:trust_model}
To model human trust dynamics, we utilize the experience-based trust model developed in~\cite{Guo2020_IJSR}. This model is general enough to be applied to various human-robot interaction (HRI) settings since it only requires the robot's performance as input. In the model, trust $t_{n}$ before the $n$th interaction is defined as a random variable that follows a Beta distribution ($t_{n} \sim \operatorname{Beta} (\alpha _{n},\beta _{n} )$). The two positive shape parameters, $\alpha _{n}$ and $\beta _{n}$, both greater than or equal to 1, represent the cumulative positive and negative interaction experience the human has had with the robot, and they are updated by 
\begin{equation}
    \label{eq:exp_update}
    (\alpha _{n+1} ,\beta _{n+1} ) = (\alpha _{n}+ w^{s} p_{n},\beta _{n}+ w^{f} (1-p_{n} )),
\end{equation}
where $w^{s} p_{n}$ and $w^{f} (1-p_{n} )$ are the experience gains from the robot's success and failure, and parameters $w^{s}$ and $w^{f}$ determine the unit gains. Here, $p_{n} \in [0,1]$ represents the robot's performance measure on the $n$th task.


\section{Reward Shaping for Trustworthy Policy}
\label{sec:method}
Previous research has shown that the robot may exhibit manipulative behavior to achieve better task rewards at the cost of losing human trust. To prevent the robot from engaging in this manipulative behavior, we introduce a trust reward function $R^{t}$ such that, at the end of round $n$, the robot receives the composite reward $R:=R^{t} +R^{r}$ instead of $R^r$. Such reward $R^{t}$ shapes the behavior of the learning agent in a Markov process and thus is named \textit{shaping reward} in literature.

By introducing the shaping reward $R^{t}$, the original Markov game $M=\langle \mathcal{S} ,\mathcal{A}^{h} ,\mathcal{A}^{r} ,T,R^{h} ,R^{r} \rangle $ is transformed into a new game $M^{\prime } =\langle \mathcal{S} ,\mathcal{A}^{h} ,\mathcal{A}^{r} ,T,R^{h} ,R\rangle $, where $R$ is the composite reward. Our research question is how to design the shaping reward $R^{t}$ to promote trust while minimizing the loss of task performance. In particular, let $\pi ^{\prime *} =\arg\max_{\pi \in \Pi } J_{M^{\prime }}( s_{1})$ be an optimal policy in $M^{\prime }$. Then the performance loss that the robot suffers amounts to $V_{1}^{\pi ^{*}} (s_{1} )-V_{1}^{\pi ^{\prime *}} (s_{1} )$. Our objective is to select a proper $R^{t}$ to increase human trust while limiting the performance loss by some $\epsilon  >0$. 
\subsection{Bounding the Performance Loss}

Ng et al. showed that, for an infinite-horizon MDP, the optimal policy in $M'$ is also optimal in $M$ if $R^{t}$ is a potential-based shaping function~\cite{ng1999policy}. A shaping reward $R^{t}$ is said to be \textit{potential-based} if there exists some real-valued function $\Phi :\mathcal{S}\rightarrow \mathbb{R}$ such that for all $s\in \mathcal{S} -\{s_{1}\} ,a^{r} \in \mathcal{A}^{r} ,a^{h} \in \mathcal{A}^{h} ,s^{\prime } \in \mathcal{S}$, 
\begin{equation*}
R^{t}\left( a^{r} ,a^{h} ,s ,s^{\prime }\right) =\gamma \Phi \left( s^{\prime }\right) -\Phi ( s) .
\end{equation*}
This result indicates which types of shaping rewards can ensure zero performance loss, i.e., $V_{1}^{\pi ^{*}} (s_{1} )-V_{1}^{\pi ^{\prime *}} (s_{1} )=0$, for an infinite-horizon MDP. However, our scenario is different in two aspects: first, we have a finite horizon; second, we allow some sacrifice in task performance to improve human trust. Despite these differences, we are inspired by the proof technique in \cite{ng1999policy} and propose that a carefully-designed potential-based shaping reward $R^{t}$ can guarantee a small performance loss in our setting.

\begin{theorem}
\label{thm1}
Let $M=\langle \mathcal{S} ,\mathcal{A}^{h} ,\mathcal{A}^{r} ,T,R^{h} ,R^{r} \rangle $ and $M^{\prime } =\langle \mathcal{S} ,\mathcal{A}^{h} ,\mathcal{A}^{r} ,T,R^{h} ,R\rangle $ be two $N$-horizon Markov games with $R=R^{r} +R^{t}$. Let $\pi ^{*} =\arg\max_{\pi \in \Pi } J_{M}( s_{1})$ and $\pi ^{\prime *} =\arg\max_{\pi \in \Pi } J_{M^{\prime}}( s_{1})$ be two optimal policies w.r.t. $M$ and $M^{\prime }$ respectively and $\epsilon $ be a positive number. Suppose there exists some real-valued function $\Phi :\mathcal{S}\rightarrow \mathbb{R}$ such that 
\begin{equation}
\label{eq:R_t_potential}
R^{t}\left( a^{r} ,a^{h} ,s ,s^{\prime }\right) =\gamma \Phi \left( s^{\prime }\right) -\Phi ( s)
\end{equation}
 for all $s,s^{\prime } \in \mathcal{S}$. Then 
\begin{equation}
V_{1}^{\pi ^{*}}( s_{1}) -V_{1}^{\pi ^{\prime *}}( s_{1}) \leqslant \epsilon 
\end{equation}
if
\begin{equation}
\label{eq:E_E_bound}
\mathbb{E}_{\pi ^{\prime *}}[ \Phi ( s_{N+1})] -\mathbb{E}_{\pi ^{*}}[ \Phi ( s_{N+1})] \leqslant \gamma ^{-N} \epsilon ,
\end{equation}
where $s_{N+1}$ is the final state of the Markov process when starting from state $s_{1}$ and following the corresponding policy for $N$ rounds.
\end{theorem}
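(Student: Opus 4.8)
The plan is to exploit the telescoping structure that a potential-based shaping reward induces along every trajectory, and then to invoke the optimality of $\pi'^*$ in the shaped game $M'$. The whole argument is short; the work is in bookkeeping the finite-horizon boundary terms correctly.

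First I would relate the two games' value functions at the initial state. Let $\widetilde{V}_1^{\pi}(s_1)=\mathbb{E}_{\pi}\left[\sum_{n=1}^N \gamma^{n-1}\bigl(x_n^r+R^t(a_n^r,a_n^h,s_n,s_{n+1})\bigr)\right]$ denote the value of $\pi$ in $M'$, so that $\widetilde{V}_1^{\pi}(s_1)=J_{M'}(s_1)$ while $V_1^{\pi}(s_1)=J_M(s_1)$. Substituting the potential form \eqref{eq:R_t_potential} into the shaping contribution yields $\sum_{n=1}^N \gamma^{n-1}\bigl(\gamma\Phi(s_{n+1})-\Phi(s_n)\bigr)$, which telescopes to $\gamma^N\Phi(s_{N+1})-\Phi(s_1)$. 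Taking the expectation over trajectories generated by $\pi$ and using that $s_1$ is deterministic, I obtain the identity
\begin{equation*}
\widetilde{V}_1^{\pi}(s_1)=V_1^{\pi}(s_1)+\gamma^N\,\mathbb{E}_{\pi}[\Phi(s_{N+1})]-\Phi(s_1).
\end{equation*}
This is the finite-horizon analogue of the identity underlying Ng et al.~\cite{ng1999policy}; unlike the infinite-horizon case, the terminal term $\gamma^N\mathbb{E}_{\pi}[\Phi(s_{N+1})]$ does not vanish, and it is exactly this surviving term that will govern the performance loss.

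Next I would apply this identity to both $\pi^*$ and $\pi'^*$ and invoke the optimality of $\pi'^*$ in $M'$, namely $\widetilde{V}_1^{\pi'^*}(s_1)\geqslant \widetilde{V}_1^{\pi^*}(s_1)$. The deterministic constant $\Phi(s_1)$ cancels on both sides, leaving
\begin{equation*}
V_1^{\pi^*}(s_1)-V_1^{\pi'^*}(s_1)\leqslant \gamma^N\bigl(\mathbb{E}_{\pi'^*}[\Phi(s_{N+1})]-\mathbb{E}_{\pi^*}[\Phi(s_{N+1})]\bigr).
\end{equation*}
Since $\gamma^N>0$, the hypothesis \eqref{eq:E_E_bound} then bounds the right-hand side by $\gamma^N\cdot\gamma^{-N}\epsilon=\epsilon$, which is the claim.

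The step I would treat most carefully — and the only genuine subtlety — is the telescoping identity and its expectation. I must retain the finite-horizon boundary terms ($\gamma^N\Phi(s_{N+1})$ at the end and $-\Phi(s_1)$ at the start) rather than discard them as one does in the infinite-horizon derivation, and I must note that $\Phi(s_1)$ is a fixed constant while $s_{N+1}$ is random and policy-dependent, so that only the terminal expectation survives when comparing the two policies. Everything else is elementary algebra driven by the optimality inequality.
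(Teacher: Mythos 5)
Your proposal is correct and follows essentially the same route as the paper's own proof: the same telescoping of the potential-based shaping terms to the identity $V_1'^{\pi}(s_1)-V_1^{\pi}(s_1)=\gamma^N\mathbb{E}_{\pi}[\Phi(s_{N+1})]-\Phi(s_1)$, followed by the optimality of $\pi'^*$ in $M'$ and the hypothesis \eqref{eq:E_E_bound}. Your explicit attention to the surviving finite-horizon boundary terms is exactly the point where this argument departs from the infinite-horizon result of Ng et al., and you handle it correctly.
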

\begin{proof}
Let 
\begin{equation}
\label{eq:x_n_t}
x_{n}^{t} =R^{t}\left( a_{n}^{h} ,a_{n}^{r} ,s_{n} ,s_{n+1}\right)
\end{equation}
 and define the value function of $M^{\prime }$ similar to Eq.~\eqref{eq:value_func_M} as
\begin{equation}
\label{eq:value_func_M_prime}
V_{n}^{\prime \pi } (s)=\mathbb{E}_{\pi }\left[\sum _{i=n}^{N} \gamma ^{i-n}\left( x_{i}^{r} +x_{i}^{t}\right)\right] .
\end{equation}
By Eq.~\eqref{eq:value_func_M} and Eq.~\eqref{eq:value_func_M_prime}, we have
\begin{equation}
\label{eq:V_V}
{\displaystyle V_{1}^{\prime \pi }( s_{1}) -V_{1}^{\pi }( s_{1}) =\mathbb{E}_{\pi }\left[\sum\limits _{n=1}^{N} \gamma ^{n-1} x_{n}^{t}\right] .}
\end{equation}
We can express the second term on the right-hand side by Eqs.~\eqref{eq:R_t_potential} and \eqref{eq:x_n_t} as
\begin{equation}
\label{eq:diff_term}
{\displaystyle \begin{aligned}
 & {\displaystyle \mathbb{E}_{\pi }\left[\sum\limits _{n=1}^{N} \gamma ^{n-1} x_{n}^{t}\right]}\\
= & {\displaystyle \mathbb{E}_{\pi }\left[\sum\limits _{n=1}^{N} \gamma ^{n-1} R^{t}\left( a_{n}^{r} ,a_{n}^{h} ,s_{n} ,s_{n+1}\right)\right]}\\
= & {\displaystyle \mathbb{E}_{\pi }\left[\sum _{n=1}^{N} \gamma ^{n} \Phi ( s_{n+1}) -\sum _{n=1}^{N} \gamma ^{n-1} \Phi ( s_{n})\right]}\\
= & {\displaystyle \gamma ^{N}\mathbb{E}_{\pi }[ \Phi ( s_{N+1})] -\Phi ( s_{1})}.
\end{aligned}}
\end{equation}
By combining Eqs.~\eqref{eq:V_V} and ~\eqref{eq:diff_term} we obtain
\begin{equation}
\label{eq:V_diff}
{\displaystyle V_{1}^{\prime \pi }( s_{1}) -V_{1}^{\pi }( s_{1}) =\gamma ^{N}\mathbb{E}_{\pi }[ \Phi ( s_{N+1})] -\Phi ( s_{1}) .}
\end{equation}
Now we can bound the performance loss when executing $\pi ^{\prime *}$ instead of $\pi ^{*}$ on $M$:
\begin{equation*}
\begin{aligned}
 & V_{1}^{\pi ^{*}}( s_{1}) -V_{1}^{\pi ^{\prime *}}( s_{1})\\
= & V_{1}^{\prime \pi ^{*}}( s_{1}) -V_{1}^{\prime \pi ^{\prime *}}( s_{1})\\
 & +\gamma ^{N}(\mathbb{E}_{\pi ^{\prime *}}[ \Phi ( s_{N+1})] -\mathbb{E}_{\pi ^{*}}[ \Phi ( s_{N+1})])\\
\leqslant  & \gamma ^{N}(\mathbb{E}_{\pi ^{\prime *}}[ \Phi ( s_{N+1})] -\mathbb{E}_{\pi ^{*}}[ \Phi ( s_{N+1})])\\
\leqslant  & \epsilon ,
\end{aligned}
\end{equation*}
where the first equality follows from Eq.~\eqref{eq:V_diff}, the first inequality holds because $\pi ^{\prime *}$ is the optimal policy for $M^{\prime }$, and the last inequality follows from the hypothesis Eq.~\eqref{eq:E_E_bound}.
\end{proof}

Theorem~\ref{thm1} allows us to bound the performance loss by the value of the potential function on the terminal states of the game. It directly implies the following corollary:

\begin{corollary}
\label{coro1}
Let $M$ and $M^{\prime }$ be two Markov games that satisfy the conditions in theorem~\ref{thm1}, $\pi ^{*}$ and $\pi ^{\prime *}$ be two optimal policies respectively, and $R^{t}$ be a potential-based shaping reward. Let $\mathcal{S}_{s_{1}}^{\pi }( N+1)$ be the set of states that are reachable from state $s_{1}$ when following policy $\pi $ after $N$ rounds. Then 
\begin{equation}
V_{1}^{\pi ^{*}}( s_{1}) -V_{1}^{\pi ^{\prime *}}( s_{1}) \leqslant \epsilon 
\end{equation}
if
\begin{equation*}
\max \Phi \left(\mathcal{S}_{s_{1}}^{\pi ^{\prime *}}( N+1)\right) -\min \Phi \left(\mathcal{S}_{s_{1}}^{\pi ^{*}}( N+1)\right) \leqslant \gamma ^{-N} \epsilon .
\end{equation*}
\end{corollary}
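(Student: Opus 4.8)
The plan is to obtain Corollary~\ref{coro1} as an immediate consequence of Theorem~\ref{thm1}. Since $M$, $M^{\prime }$, $\pi ^{*}$, $\pi ^{\prime *}$, and the potential-based reward $R^{t}$ are assumed to satisfy every hypothesis of Theorem~\ref{thm1}, the only thing left to check is that the stated condition on the extreme values of $\Phi$ implies the expectation bound~\eqref{eq:E_E_bound}; once that is established, the conclusion $V_{1}^{\pi ^{*}}( s_{1}) -V_{1}^{\pi ^{\prime *}}( s_{1}) \leqslant \epsilon$ follows verbatim from the theorem.

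First I would note that, starting from $s_{1}$ and following $\pi ^{\prime *}$ for $N$ rounds, the terminal state $s_{N+1}$ is a random variable whose support lies in the reachable set $\mathcal{S}_{s_{1}}^{\pi ^{\prime *}}( N+1)$. Consequently $\Phi ( s_{N+1}) \leqslant \max \Phi (\mathcal{S}_{s_{1}}^{\pi ^{\prime *}}( N+1))$ pointwise, and taking expectations gives $\mathbb{E}_{\pi ^{\prime *}}[ \Phi ( s_{N+1})] \leqslant \max \Phi (\mathcal{S}_{s_{1}}^{\pi ^{\prime *}}( N+1))$. By the symmetric argument applied to $\pi ^{*}$, every realization satisfies $\Phi ( s_{N+1}) \geqslant \min \Phi (\mathcal{S}_{s_{1}}^{\pi ^{*}}( N+1))$, hence $\mathbb{E}_{\pi ^{*}}[ \Phi ( s_{N+1})] \geqslant \min \Phi (\mathcal{S}_{s_{1}}^{\pi ^{*}}( N+1))$.

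Subtracting these two estimates and invoking the hypothesis yields
\begin{align*}
&\mathbb{E}_{\pi ^{\prime *}}[ \Phi ( s_{N+1})] -\mathbb{E}_{\pi ^{*}}[ \Phi ( s_{N+1})]\\
&\quad \leqslant \max \Phi (\mathcal{S}_{s_{1}}^{\pi ^{\prime *}}( N+1)) -\min \Phi (\mathcal{S}_{s_{1}}^{\pi ^{*}}( N+1))\\
&\quad \leqslant \gamma ^{-N} \epsilon ,
\end{align*}
which is precisely condition~\eqref{eq:E_E_bound}. Applying Theorem~\ref{thm1} then closes the argument.

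The derivation is essentially bookkeeping, so I do not anticipate a substantive obstacle. The one point that warrants care is the claim that the support of the terminal-state distribution is contained in $\mathcal{S}_{s_{1}}^{\pi }( N+1)$, which holds by the definition of reachability, together with the tacit assumption that $\Phi$ actually attains a maximum and minimum on these sets (e.g., when the state space is finite). If the state space were continuous one would instead read $\max$ and $\min$ as $\sup$ and $\inf$, and the identical chain of inequalities would still go through.
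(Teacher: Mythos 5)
Your proposal is correct and follows the same route as the paper: bound each expectation by the extremum of $\Phi$ over the corresponding reachable set, subtract to obtain condition~\eqref{eq:E_E_bound}, and invoke Theorem~\ref{thm1}. The paper compresses this into a single "apparently" chain of inequalities, so your version merely makes explicit the support argument (and the $\sup$/$\inf$ caveat) that the paper leaves tacit.
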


\begin{proof}
Apparently, 
\begin{equation*}
\begin{aligned}
 & \mathbb{E}_{\pi ^{\prime *}}[ \Phi ( s_{N+1})] -\mathbb{E}_{\pi ^{*}}[ \Phi ( s_{N+1})]\\
\leqslant  & \max \Phi \left(\mathcal{S}_{s_{1}}^{\pi ^{\prime *}}( N+1)\right) -\min \Phi \left(\mathcal{S}_{s_{1}}^{\pi ^{*}}( N+1)\right)\\
\leqslant  & \gamma ^{-N} \epsilon .
\end{aligned}
\end{equation*}
The result follows from theorem~\ref{thm1}.
\end{proof}

Compared with theorem~\ref{thm1}, the condition in corollary~\ref{coro1} is easier to verify.

\subsection{Trust-seeking Shaping Reward via Experience-based Trust Dynamics}

In this section, we apply corollary~\ref{coro1} and the experience-based trust model to design a shaping reward for encouraging trust-seeking behavior. Recall that, in the experience-based trust model, human trust is represented as an experience tuple $( \alpha ,\beta )$. Suppose we have a Markov game $M=\langle \mathcal{S} ,\mathcal{A}^{h} ,\mathcal{A}^{r} ,T,R^{h} ,R^{r} \rangle $ where the human trust $( \alpha ,\beta )$ can be extracted from the state variable $s$ as $( \alpha ,\beta ) =g( s)$. We define a potential-based trust reward as 
\begin{equation}
\label{eq:R_t_a_b_def}
\begin{aligned}
R^{t}\left( a^{r} ,a^{h} ,s ,s^{\prime }\right) & =\gamma \phi \left( g\left( s^{\prime }\right)\right) -\phi ( g( s))\\
 & =\gamma \phi \left( \alpha ^{\prime } ,\beta ^{\prime }\right) -\phi ( \alpha ,\beta ) ,
\end{aligned}
\end{equation}
with $\phi $ be to determined. Here the actual potential function is $\Phi :=\phi \circ g$.

Let $R=R^{r} +R^{t}$ be the new reward and $M^{\prime } =\langle \mathcal{S} ,\mathcal{A}^{h} ,\mathcal{A}^{r} ,T,R^{h} ,R\rangle $ be the transformed Markov game. We will apply corollary~\ref{coro1} to bound the performance loss. For any policy $\pi $,
\begin{equation}
\label{eq:sets_relation}
\Phi \left(\mathcal{S}_{s_{1}}^{\pi }( N+1)\right) =\phi \left( g\left(\mathcal{S}_{s_{1}}^{\pi }( N+1)\right)\right) =\phi \left(\mathcal{E}_{s_{1}}^{\pi }( N+1)\right) ,
\end{equation}
where $\mathcal{E}_{s_{1}}^{\pi }( N+1)$ is the collection of reachable trust state at round $N+1$. Based on Eq.~\eqref{eq:exp_update}, trust $( \alpha _{N+1} ,\beta _{N+1})$ at round $N+1$ is
\begin{equation*}
{\textstyle \left( \alpha _{1} +w^{s}\sum _{n=1}^{N} p_{n} ,\beta _{1} +w^{f}\left( N-\sum _{n=1}^{N} p_{n}\right)\right)},
\end{equation*}
where $p_{1} ,p_{2} ,\dotsc ,p_{N}$ are the robot's performance values during the interaction. Since $\sum _{n=1}^{N} p_{n} \in [ 0,N]$, $( \alpha _{N+1} ,\beta _{N+1})$ lies on the line 
\begin{equation}
\label{eq:final_trust_line}
l_{N+1} =\{( \alpha _{1} +w^{s} t,\beta _{1} +w^{f}( N-t)) \  | \ t\in [ 0,N]\},
\end{equation}
which indicates that 
\begin{equation}
\label{eq:sets_relation_2}
\mathcal{E}_{s_{1}}^{\pi }( N+1) \subseteq l_{N+1} .
\end{equation}
Therefore, by Eqs.~\eqref{eq:sets_relation} and \eqref{eq:sets_relation_2}, it suffices to have
\begin{equation}
\label{eq:loss_constraint}
\max \phi ( l_{N+1}) -\min \phi ( l_{N+1}) \leqslant \gamma ^{-N} \epsilon 
\end{equation}
for the condition in corollary~\ref{coro1} to hold, where $l_{N+1}$ is given in Eq.~\eqref{eq:final_trust_line}.

Eq.~\eqref{eq:loss_constraint} constrains the choices of $\phi $ such that the performance loss is within $\epsilon $. In addition, we should design the function $\phi $ such that the shaping reward $R^t$ optimizes human trust. For example, if we want to increase human trust, we should reward the robot for if the future state has higher trust and penalize the robot otherwise. By Eq.~\eqref{eq:R_t_a_b_def}, one way to achieve this is to add the following constraints:
\begin{equation}
\label{eq:generic_constraint}
\begin{aligned}
\gamma \phi \left( \alpha ^{\prime } ,\beta ^{\prime }\right) -\phi ( \alpha ,\beta ) \geqslant 0, & \ \text{if} \ \frac{\alpha ^{\prime }}{\alpha ^{\prime }+\beta ^{\prime }} \geqslant \frac{\alpha }{\alpha+ \beta } ;\\
\gamma \phi \left( \alpha ^{\prime } ,\beta ^{\prime }\right) -\phi ( \alpha ,\beta ) < 0, & \ \text{otherwise} .
\end{aligned}
\end{equation}
Here $\frac{\alpha ^{\prime }}{\alpha ^{\prime }+\beta ^{\prime }} \geqslant \frac{\alpha }{\alpha+ \beta }$ indicates trust state $\left( \alpha ^{\prime } ,\beta ^{\prime }\right)$ has higher expected trust compared to $( \alpha ,\beta )$, since we assume human trust follows the Beta distribution $\operatorname{Beta}( \alpha ,\beta )$. Another example is trust calibration. If our goal is to calibrate human trust around a point $t^*$, we can let $R^t$ to reward the robot if human trust is moving towards $t^*$ by forcing $\gamma \phi \left( \alpha ^{\prime } ,\beta ^{\prime }\right) -\phi ( \alpha ,\beta ) \geqslant 0$ if $\left| \frac{\alpha ^{\prime }}{\beta ^{\prime } +\alpha ^{\prime }} -t^{*}\right| \leqslant \left| \frac{\alpha }{\beta +\alpha } -t^{*}\right| $ and $\gamma \phi \left( \alpha ^{\prime } ,\beta ^{\prime }\right) -\phi ( \alpha ,\beta ) < 0$ otherwise.

\vspace{-1mm}
\section{Case Study}\label{sec:case}
To assess our framework's effectiveness, we simulate a search-and-rescue (SAR) mission, comparing the robot's optimal policies, both with and without a shaping reward.
\subsection{The SAR Mission}
The SAR mission was inspired by the work of Wang et al.~\cite{wang2016impact}, where a human and a robot work together to search multiple sites in a town for potential hazards. At each site, the robot enters first to scan for threats and then advises the human whether to wear protective gear before entering. However, wearing the heavy gear is time-consuming, and if there is no threat, it wastes valuable time. On the other hand, if the human chooses not to wear the protective gear and there is a threat, they risk injury. The objective is to complete the mission as quickly as possible while minimizing the human's health loss.

We assume that the human-robot team starts to search from site $1$ until site $N$. At site $n$, a threat indicator $\eta _{n}$ is drawn from a Bernoulli distribution $\operatorname{Bern} (d_{n} )$. There is a threat in site $n$ if $\eta _{n} =1$ and no threat otherwise. The danger level $d_{n}$ is drawn from the uniform distribution $\operatorname{U} [0,1]$. The human-robot team does not know $\eta _{n}$ or $d_{n}$. Instead, prior to the start of the mission, the team is provided with an estimation $d_{n}^{h}$ of $d_{n}$. Before entering site $n$, the robot will analyze the site based on its sensory input and reach a more accurate estimation $d_{n}^{r}$ of $d_{n}$. $d_{k}^{h}$ and $d_{k}^{r}$ follow Beta distribution $\operatorname{Beta} (\kappa ^{h} d_{n} ,\kappa ^{h} (1-d_{n} ))$ and $\operatorname{Beta} (\kappa ^{r} d_{n} ,\kappa ^{r} (1-d_{n} ))$ respectively. We assume that $\kappa ^{r}  >\kappa ^{h} \geqslant 1$ such that the robot has a more accurate assessment of $d_{n}$ compared with the human. We summarize the probability model as follows:
\begin{equation*}
\begin{aligned}
 & \text{Threat indicator } & \eta _{n} & \overset{\text{i.i.d.}}{\sim }\operatorname{Bern} (d_{n} )\\
 & \text{Danger level } & d_{n} & \overset{\text{i.i.d.}}{\sim }\operatorname{U} [0,1]\\
 & \text{Human's estimation of } d_{n} & d_{n}^{h} & \overset{\text{i.i.d.}}{\sim }\operatorname{Beta} (\kappa ^{h} d_{n} ,\kappa ^{h} (1-d_{n} ))\\
 & \text{Robot's estimation of } d_{n} & d_{n}^{r} & \overset{\text{i.i.d.}}{\sim }\operatorname{Beta} (\kappa ^{r} d_{n} ,\kappa ^{r} (1-d_{n} ))
\end{aligned}
\end{equation*}

We formulate the SAR mission as a two-player Markov game $M=\langle \mathcal{S} ,\mathcal{A}^{h} ,\mathcal{A}^{r} ,T,R^{h} ,R^{r} \rangle $ introduced in section \ref{sec:formulation_game}. The state space $\mathcal{S} =[ 1,\infty )^{2}$ comprises all possible experience pairs $( \alpha ,\beta )$ that represent the human's trust. The initial state is $s_{1} =(\alpha _{1} ,\beta _{1} )$. The robot's action set, denoted as $\mathcal{A}^{r} =\{0,1\}$, includes two options: recommending wearing and recommending not wearing the protective gear, represented by $a^{r} =1$ and $a^{r} =0$, respectively. Similarly, the human's action set, denoted as $\mathcal{A}^{h} =\{0,1\}$, includes two options: wearing or not wearing protective gear, represented by $a^{h} =1$ and $a^{h} =0$, respectively. We define the robot's performance at site $n$ as $p_{n} =\mathbf{1}\left\{a_{n}^{r} =\eta _{n}\right\}$, which evaluates to 1 if the robot's recommendation agrees with the presence of the threat ($a_{n}^{r} =\eta _{n}$) and 0 otherwise. The state $( \alpha _{n} ,\beta _{n})$ at site $n$ transitions to 
\begin{equation*}
( \alpha _{n+1} ,\beta _{n+1}) =( \alpha _{n} +w^{s} p_{n} ,\beta _{n} +w^{f} (1-p_{n} ))
\end{equation*}
at site $n+1$ as specified in the experience-based trust model. We assume the robot has already learned the parameters $w^{s}$ and $w^{f}$ from its previous interactions with the human. We also assume that the robot and the human share the same task reward, i.e., $R^{h} =R^{r} =-w^{\text{H}} \Delta ^{\text{H}} -w^{\text{T}} \Delta ^{\text{T}}$, where $\Delta ^{\text{H}}$ and $\Delta ^{\text{T}}$ are the time cost and the health cost for the human-robot team and $w^{\text{H}}$ and $w^{\text{T}}$ are the corresponding weights. The values of $\Delta ^{\text{H}}$ and $\Delta ^{\text{T}}$ are given in table~\ref{tab:weighted_sum}, and $w^{\text{H}}$ and $w^{\text{T}}$ are set to 1 and 0.2. The discount factor $\gamma$ is set to 0.9.

We assume that the human follows the reverse-psychology policy as introduced in \cite{guo2021reverse}, where the human will likely comply with the robot's recommendation when human trust is high and will do the opposite when trust is low. Specifically, we have
\begin{equation*}
\Pr\left( a_{n}^{h} =a_{n}^{r}\right) =\frac{\alpha _{n}}{\alpha _{n} +\beta _{n}}
\text{ and }  \Pr\left( a_{n}^{h} \neq a_{n}^{r}\right) =\frac{\beta _{n}}{\alpha _{n} +\beta _{n}} ,
\end{equation*}
where $\frac{\alpha _{n}}{\alpha _{n} +\beta _{n}}$ is the expected human trust since the human trust follows the beta distribution $\operatorname{Beta}( \alpha _{n} ,\beta _{n})$.

Given the model conditions above, we can calculate the probabilities of all four cases listed in table \ref{tab:weighted_sum}, for different actions $a^{r}$. This enables us to determine the expected immediate reward for each $a^{r}$ and thus apply the value iteration method to derive an optimal policy for the robot.

\begin{table}[h]
\centering
\caption{Value Table of $( \Delta ^{\text{H}} ,\Delta ^\text{T})$ }
\label{tab:weighted_sum}
\begin{tabular}{c|c|c|c} 
\hline
\multicolumn{2}{c|}{\multirow{2}{*}{}}  & \multicolumn{2}{c}{Protective gear}  \\ 
\cline{3-4}
\multicolumn{2}{c|}{}                   & Yes ($a^{h} =1$)       & No ($a^{h} =0$)                     \\ 
\hline
\multirow{2}{*}{{Threat existence}} & Yes  ($\eta =1$) & $(1,300)$  & $(100,50)$              \\ 
\cline{2-4}
                                  & No ($\eta =0$) & $(0,250)$  & $(0,30)$                \\
\hline
\end{tabular}
\end{table}

\vspace{-3mm}
\subsection{Shaping the Reward}
Suppose that we are interested in increasing human trust during the interaction while limiting the performance loss by $\epsilon $. We apply the reward-shaping technique developed in section~\ref{sec:method} to achieve this goal. We define the shaping reward as $R^{t}\left( \alpha ,\beta ,\alpha ^{\prime } ,\beta ^{\prime }\right) =\gamma \Phi \left( \alpha ^{\prime } ,\beta ^{\prime }\right) -\Phi ( \alpha ,\beta )$, where the function $\Phi:\mathcal{S} \rightarrow \mathbb{R}$ is to be determined. We observe that, from a state $( \alpha  ,\beta )$, the next state can either be $\left( \alpha ^{\uparrow } ,\beta ^{\uparrow }\right) =\left( \alpha  +w^{s} ,\beta \right)$, where the expectation of trust increases, or $\left( \alpha ^{\downarrow } ,\beta ^{\downarrow }\right) =\left( \alpha  ,\beta +w^{f}\right)$, where the expectation of trust decreases. Instead of using the generic method in Eq.~\eqref{eq:generic_constraint}, \ we can incentivize the trust-seeking behavior by maximizing the reward difference between the two future states, i.e., maximizing the value $R^{t}\left( \alpha,\beta,\alpha ^{\uparrow },\beta ^{\uparrow }\right) -R^{t}\left( \alpha  ,\beta  ,\alpha ^{\downarrow } ,\beta ^{\downarrow }\right)$. Combining with the performance loss constraint in Eq.~\eqref{eq:loss_constraint}, we obtain the following optimization problem
\begin{equation}
\label{eq:program1}
\begin{aligned}
\max \  & R^{t}\left( \alpha  ,\beta  ,\alpha ^{\uparrow } ,\beta ^{\uparrow }\right) -R^{t}\left( \alpha  ,\beta  ,\alpha ^{\downarrow } ,\beta ^{\downarrow }\right)\\
\text{s.t.} \  & \max \Phi ( l_{N+1}) -\min \Phi ( l_{N+1}) \leqslant \gamma ^{-N} \epsilon ,
\end{aligned}
\end{equation}
where $l_{N+1}$ is defined in Eq.~\eqref{eq:final_trust_line}. We choose a linear potential function: $\Phi ( \alpha ,\beta ) =a\alpha +b\beta $ with $a,b\in \mathbb{R}$. With some algebraic manipulation, \eqref{eq:program1} becomes a clean linear program:
\begin{equation}
\begin{aligned}
\max \  & aw^{s} -bw^{f}\\
\text{s.t.} \  & \frac{-\gamma ^{-N} \epsilon }{N} \leqslant aw^{s} -bw^{f} \leqslant \frac{\gamma ^{-N} \epsilon }{N} .
\end{aligned}
\end{equation}
As the above program is underdetermined, we enforce an extra constraint that $b=0$ and then obtain the optimal solution $(a,b)=(\frac{\gamma ^{-N} \epsilon }{Nw^{s}},0)$. Therefore, the optimal potential function is $\Phi ( \alpha ,\beta ) =\frac{\gamma ^{-N} \epsilon }{Nw^{s}} \alpha $, and the shaping reward is 
\begin{equation}
\label{eq:Rt_SAR}
R^{t}\left( \alpha ,\beta ,\alpha ^{\prime } ,\beta ^{\prime }\right) =\frac{\gamma ^{-N} \epsilon }{Nw^{s}}\left( \gamma \alpha ^{\prime } -\alpha \right) .
\end{equation}

\subsection{Simulation Results}
Let $M$ be the original Markov game without the shaping reward and $M'$ be the one with shaping reward $R^t$. We solve $M'$ with 4 different values of $\epsilon$ and, in figure~\ref{fig:results}, plot the optimal action $\pi ^{\prime *} (\alpha_1,\beta_1)$ in $M'$, the value function $V_{1}^{\prime \pi^{\prime *} } (\alpha_1,\beta_1)$ of $\pi ^{\prime *}$ in $M'$ (defined in Eq.~\eqref{eq:value_func_M_prime}), and the value function $V_{1}^{\pi^{\prime *} } (\alpha_1,\beta_1)$ of $\pi ^{\prime *}$ in $M$ (defined in Eq.~\eqref{eq:value_func_M}), for various values of $(\alpha_1,\beta_1)$. In the optimal action plots, the black area corresponds to the states where the optimal action of the robot is not recommending the human to wear protective gear, i.e., $a^r_1=0$, while the white area corresponds to recommending to wear the gear, i.e., $a^r_1=1$. 

\begin{figure}[h]
  \centering
    \begin{subfigure}{1\columnwidth}
        \captionsetup{width=1\linewidth}
        \centering
        \includegraphics[width=1\columnwidth]{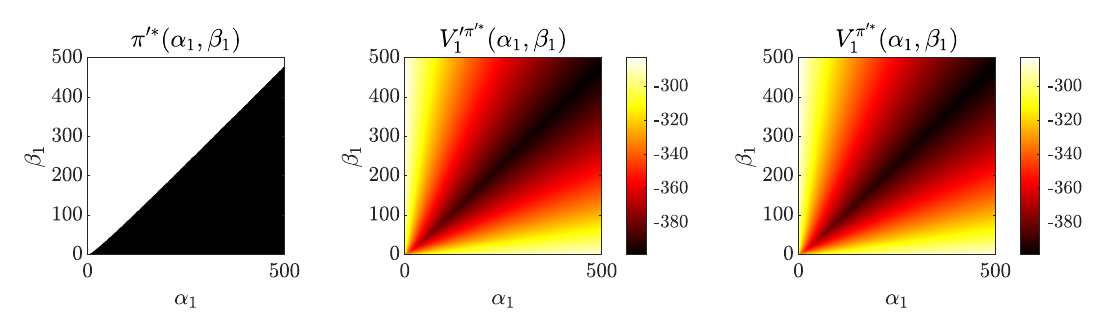}
        \caption{$\epsilon=0$. No reward shaping.}
        \label{fig:A}
    \end{subfigure}
    \begin{subfigure}{1\columnwidth}
        \captionsetup{width=1\linewidth}
        \centering
        \vspace{5pt}
        \includegraphics[width=1\columnwidth]{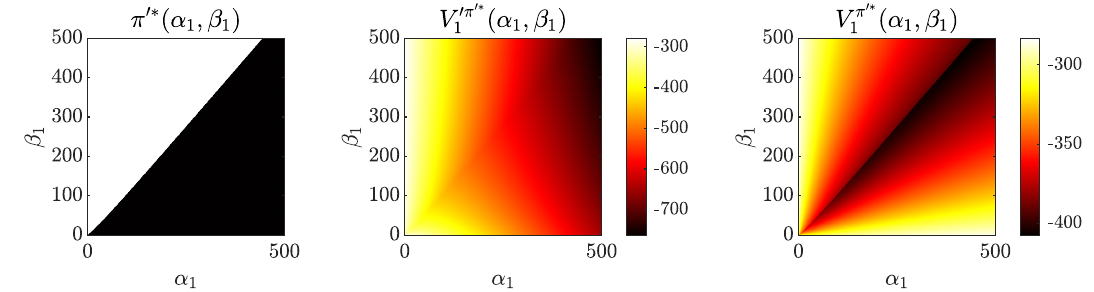}
        \caption{$\epsilon=30$.}
        \label{fig:B}
    \end{subfigure}
    \begin{subfigure}{1\columnwidth}
        \captionsetup{width=1\linewidth}
        \centering
        \vspace{5pt}
        \includegraphics[width=1\columnwidth]{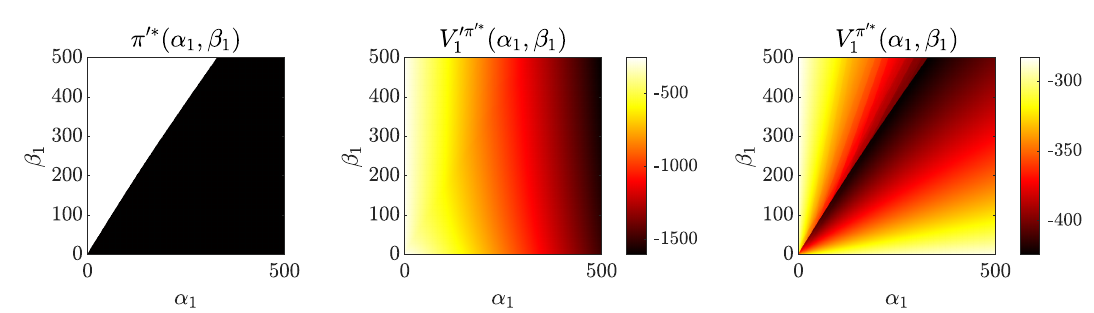}
        \caption{$\epsilon=100$.}
        \label{fig:C}
    \end{subfigure}
    \begin{subfigure}{1\columnwidth}
        \captionsetup{width=1\linewidth}
        \centering
        \vspace{5pt}
        \includegraphics[width=1\columnwidth]{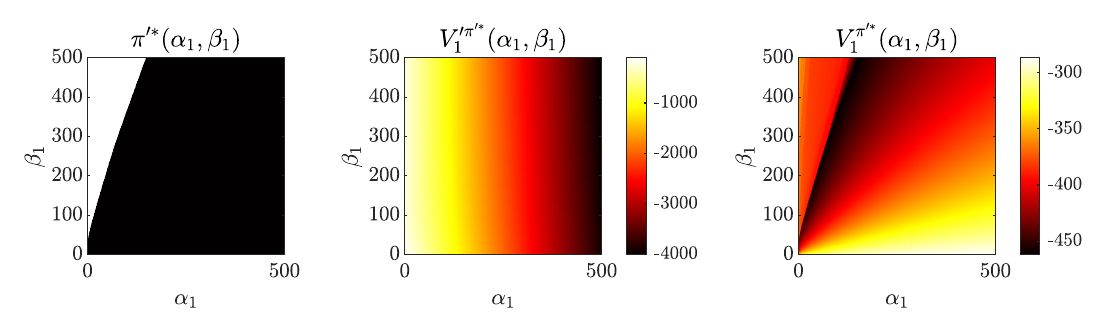}
        \caption{$\epsilon=300$.}
        \label{fig:D}
    \end{subfigure}
    \caption{
    Plots of the optimal action $\pi ^{\prime *} (\alpha_1,\beta_1)$, value function $V_{1}^{\prime \pi^{\prime *} } (\alpha_1,\beta_1)$ in the Markov game $M'$ with shaping reward, and value function $V_{1}^{\pi^{\prime *} } (\alpha_1,\beta_1)$ in the original game $M$, under different values of $\epsilon$. In the optimal action plots, the black area corresponds to the states where the optimal action of the robot is not recommending the human to wear protective gear, i.e., $a^r_1=0$, and the white area corresponds to recommending to wear the gear. The robot's estimated danger level $d^r_1$ at the first site is 0.06.}
  \label{fig:results}
  \vspace{-4mm}
\end{figure}

At the first site, the robot's perceived danger level $d^r_1$ is 0.06, which indicates that the ``trustworthy'' action is to recommend the human not to wear the gear. In figure~\ref{fig:A}, we set $\epsilon=0$ so there is no shaping reward. The optimal action is reversed around the 45-degree line, which means the robot will reverse its action when the human trust crosses a threshold. This manipulative behavior is consistent with the finding in~\cite{guo2021reverse}. In figure~\ref{fig:B}, we set $\epsilon=30$ to allow reward shaping at the cost of a moderate amount of performance loss. We can observe that the black area grows larger compared with that of figure~\ref{fig:A}, which implies the trust reward guided the robot to gain human trust by recommending not to wear the gear. In figure~\ref{fig:D}, we set a large value of 300 for $\epsilon$, and the black area is the largest among all the settings, indicating that the robot will choose the ``righteous'' action in most cases. Moreover, a comparison across $V_{1}^{\pi^{\prime *} }$ with different $\epsilon$ values shows that the task reward loss caused by the shaping reward is within $\epsilon$, and initial states with higher trust have a higher value in $M$ when $\epsilon$ is set to be higher. This shows the effectiveness of the proposed reward-shaping method.

Finally, we notice an intriguing behavior wherein, as $\epsilon$ increases, the value function $V_{1}^{\prime \pi^{\prime *} }$ exhibits higher values at states characterized by lower trust. For instance, given $\epsilon=300$, states denoted by low $\alpha_1$ and high $\beta_1$ (low-trust states), display greater values in contrast to those states with high trust. This observation may be attributed to two factors. Firstly, with an increment in $\epsilon$, the shaping reward $R^t$ magnitude also escalates, as is evidenced by Eq.~\eqref{eq:Rt_SAR}. As a result, the pattern of $V_{1}^{\prime \pi^{\prime *} }$ is largely dictated by the pattern of $\sum_n R^t$. Secondly, it appears that the algorithm identifies low-trust states as having a greater potential for earning $R^t$, particularly when compared to higher-trust states.



In conclusion, the simulation results show that our reward-shaping method successfully guides the robot to actively gain human trust, overcoming the manipulative behavior in the pure performance-driven setting.

\section{Conclusion}\label{sec:discussion}
In this work, we proposed a framework to balance task reward and human trust. We formulated the problem as a reward-shaping problem and proposed a novel framework to solve it. We evaluated the proposed framework in a simulation scenario where a human-robot team performs a search-and-rescue mission. The results showed that the proposed framework successfully modifies the robot's optimal policy, enabling it to increase human trust with a minimal task performance cost. However, the work should be viewed in light of the following limitations. First, we only provide a sufficient condition to guarantee small performance loss. A necessary condition is in need to complete the theory. Second, we used linear potential functions for designing the shaping reward. The effectiveness of other forms of potential functions can be investigated in future research.






\bibliographystyle{IEEEtran}
\bibliography{reference}

\begin{thebibliography}{10}
\providecommand{\url}[1]{#1}
\csname url@samestyle\endcsname
\providecommand{\newblock}{\relax}
\providecommand{\bibinfo}[2]{#2}
\providecommand{\BIBentrySTDinterwordspacing}{\spaceskip=0pt\relax}
\providecommand{\BIBentryALTinterwordstretchfactor}{4}
\providecommand{\BIBentryALTinterwordspacing}{\spaceskip=\fontdimen2\font plus
\BIBentryALTinterwordstretchfactor\fontdimen3\font minus
  \fontdimen4\font\relax}
\providecommand{\BIBforeignlanguage}[2]{{%
\expandafter\ifx\csname l@#1\endcsname\relax
\typeout{** WARNING: IEEEtran.bst: No hyphenation pattern has been}%
\typeout{** loaded for the language `#1'. Using the pattern for}%
\typeout{** the default language instead.}%
\else
\language=\csname l@#1\endcsname
\fi
#2}}
\providecommand{\BIBdecl}{\relax}
\BIBdecl

\bibitem{lee2004trust}
J.~D. Lee and K.~A. See, ``Trust in automation: Designing for appropriate
  reliance,'' \emph{Human Factors}, vol.~46, no.~1, pp. 50--80, 2004.

\bibitem{Sheridan:2016kn}
T.~B. Sheridan, ``{Human-Robot Interaction: Status and Challenges},''
  \emph{Human Factors}, vol.~58, no.~4, pp. 525--532, 2016.

\bibitem{kok2020trust}
B.~C. Kok and H.~Soh, ``Trust in robots: Challenges and opportunities,''
  \emph{Current Robotics Reports}, pp. 1--13, 2020.

\bibitem{chen2020trust}
M.~Chen, S.~Nikolaidis, H.~Soh, D.~Hsu, and S.~Srinivasa, ``Trust-aware
  decision making for human-robot collaboration: Model learning and planning,''
  \emph{ACM Transactions on Human-Robot Interaction (THRI)}, vol.~9, no.~2, pp.
  1--23, 2020.

\bibitem{guo2021reverse}
Y.~Guo, C.~Shi, and X.~J. Yang, ``Reverse psychology in trust-aware human-robot
  interaction,'' \emph{IEEE Robotics and Automation Letters}, vol.~6, no.~3,
  pp. 4851--4858, 2021.

\bibitem{Bhat_RAL_2022}
S.~Bhat, J.~B. Lyons, C.~Shi, and X.~J. Yang, ``Clustering trust dynamics in a
  human-robot sequential decision-making task,'' \emph{IEEE Robotics and
  Automation Letters}, vol.~7, no.~4, pp. 8815--8822, 2022.

\bibitem{Yang:2017:EEU:2909824.3020230}
X.~J. Yang, V.~V. Unhelkar, K.~Li, and J.~A. Shah, ``Evaluating effects of user
  experience and system transparency on trust in automation,'' in
  \emph{Proceedings of the 12th ACM/IEEE International Conference on
  Human-Robot Interaction (HRI '17)}.\hskip 1em plus 0.5em minus 0.4em\relax
  ACM, 2017, pp. 408--416.

\bibitem{Guo2020_IJSR}
Y.~Guo and X.~J. Yang, ``{Modeling and Predicting Trust Dynamics in Human-Robot
  Teaming: A Bayesian Inference Approach},'' \emph{International Journal of
  Social Robotics}, vol.~13, pp. 1899--1909, 2021.

\bibitem{Yang2021_HFJ}
X.~J. Yang, C.~Schemanske, and C.~Searle, ``{Toward Quantifying Trust Dynamics:
  How People Adjust Their Trust After Moment-to-Moment Interaction With
  Automation},'' \emph{Human Factors}, p. 00187208211034716, 2021.

\bibitem{Lee:1992it}
J.~D. Lee and N.~Moray, ``{Trust control strategies and allocation of function
  in human-machine systems},'' \emph{Ergonomics}, vol.~35, no.~10, pp.
  1243--1270, 1992.

\bibitem{manzey2012human}
D.~Manzey, J.~Reichenbach, and L.~Onnasch, ``Human performance consequences of
  automated decision aids: The impact of degree of automation and system
  experience,'' \emph{Journal of Cognitive Engineering and Decision Making},
  vol.~6, no.~1, pp. 57--87, 2012.

\bibitem{DeVisser_IJSR}
E.~J. de~Visser, M.~M. Peeters, M.~F. Jung, S.~Kohn, T.~H. Shaw, R.~Pak, and
  M.~A. Neerincx, ``Towards a theory of longitudinal trust calibration in
  human–robot teams,'' \emph{International Journal of Social Robotics},
  vol.~12, no.~2, pp. 459--478, 2020.

\bibitem{malle2021multidimensional}
B.~F. Malle and D.~Ullman, ``A multidimensional conception and measure of
  human-robot trust,'' in \emph{Trust in human-robot interaction}.\hskip 1em
  plus 0.5em minus 0.4em\relax Elsevier, 2021, pp. 3--25.

\bibitem{Yang2021_chapter}
X.~J. Yang, Y.~Guo, and C.~Schemanske, ``{From Trust to Trust Dynamics:
  Combining Empirical and Computational Approaches to Model and Predict Trust
  Dynamics in Human-Autonomy Interaction},'' in \emph{{Human-Automation
  Interaction: Transportation}}, V.~G. Duffy, S.~J. Landry, J.~D. Lee, and
  N.~A. Stanton, Eds., 2023, pp. 253--265.

\bibitem{Xu2015optimo}
A.~Xu and G.~Dudek, ``Optimo: Online probabilistic trust inference model for
  asymmetric human-robot collaborations,'' in \emph{2015 10th ACM/IEEE
  International Conference on Human-Robot Interaction (HRI)}.\hskip 1em plus
  0.5em minus 0.4em\relax IEEE, 2015, pp. 221--228.

\bibitem{soh2020multi}
H.~Soh, Y.~Xie, M.~Chen, and D.~Hsu, ``Multi-task trust transfer for
  human--robot interaction,'' \emph{The International Journal of Robotics
  Research}, vol.~39, no. 2-3, pp. 233--249, 2020.

\bibitem{hu2016real}
W.-L. Hu, K.~Akash, N.~Jain, and T.~Reid, ``Real-time sensing of trust in
  human-machine interactions,'' \emph{IFAC-PapersOnLine}, vol.~49, no.~32, pp.
  48--53, 2016.

\bibitem{Guo-RSS-23}
Y.~Guo, X.~J. Yang, and C.~Shi, ``{Enabling Team of Teams: A Trust Inference
  and Propagation (TIP) Model in Multi-Human Multi-Robot Teams},'' in
  \emph{Proceedings of Robotics: Science and Systems}, Daegu, Republic of
  Korea, July 2023.

\bibitem{losey2019robots}
D.~P. Losey and D.~Sadigh, ``Robots that take advantage of human trust,'' in
  \emph{2019 IEEE/RSJ International Conference on Intelligent Robots and
  Systems (IROS)}.\hskip 1em plus 0.5em minus 0.4em\relax IEEE, 2019, pp.
  7001--7008.

\bibitem{ng1999policy}
A.~Y. Ng, D.~Harada, and S.~Russell, ``Policy invariance under reward
  transformations: Theory and application to reward shaping,'' in \emph{ICML},
  vol.~99, 1999, pp. 278--287.

\bibitem{wang2016impact}
N.~Wang, D.~V. Pynadath, and S.~G. Hill, ``The impact of pomdp-generated
  explanations on trust and performance in human-robot teams.'' in
  \emph{AAMAS}, 2016, pp. 997--1005.

\end{thebibliography}

\end{document}